\newtheorem{theo}{Theorem}
\newtheorem*{proof*}{Proof}
\newtheorem{lemma}{Lemma}
\newtheorem{defi}{Definition}
\begin{document}

\title{Network Implosion: Effective Model Compression for ResNets via Static Layer Pruning and Retraining}

\author{Yasutoshi Ida \\ NTT Software Innovation Center \\ 3-9-11, Midori-cho Musashino-shi, Tokyo, Japan \\ ystsh521@gmail.com \and 
Yasuhiro Fujiwara \\ NTT Software Innovation Center \\ 3-9-11, Midori-cho Musashino-shi, Tokyo, Japan \\ fujiwara.yasuhiro@lab.ntt.co.jp}

\maketitle

\begin{abstract}
Residual Networks with convolutional layers are widely used in the field of machine learning.
Since they effectively extract features from input data by stacking multiple layers, they can achieve high accuracy in many applications.
However, the stacking of many layers raises their computation costs.
To address this problem, we propose Network Implosion, it erases multiple layers from Residual Networks without degrading accuracy.
Our key idea is to introduce a priority term that identifies the importance of a layer; we can select unimportant layers according to the priority and erase them after the training.
In addition, we retrain the networks to avoid critical drops in accuracy after layer erasure.
A theoretical assessment reveals that our ``erasure and retraining'' scheme can erase layers without accuracy drop, and achieve higher accuracy than is possible with training from scratch.
Our experiments show that Network Implosion can, for classification on Cifar-10/100 and ImageNet, reduce the number of layers by 24.00\%\(\sim\)42.86\% without any drop in accuracy.
\end{abstract}

\section{Introduction}
Convolutional Neural Networks (CNNs) \cite{lenet5} are important tools in the machine learning community because they have a wide field of applications such as
scene labeling \cite{icml1},
online tracking \cite{icml2}, graph classification \cite{icml3}, sequence to sequence learning \cite{icml4}, 
language modeling \cite{icml5}, and protein structure prediction \cite{icml6}.
Although modern CNNs need a lot of time for the training phase, it can be shortened by using many computation resources.
In fact, \cite{akiba} shows that large CNNs can be trained within 15 minutes on ImageNet datasets by using 1024 GPUs.
On the other hand, after the training phase, the inference phase is used to perform prediction in service deployment.
Since the era of IoT has arrived, it is increasingly important to perform inference on devices with limited resources such as image classification on embedded systems \cite{qcnn}, character recognition on portable devices \cite{cr} and speech recognition on mobile devices \cite{mobilesr}.

While we need to perform inference with limited resources, the number of layers in CNNs has been increasing every year in order to raise accuracy.
In 1998, LeNet-5 used 5 layers to classify handwriting digits \cite{lenet5}.
In 2012, AlexNet won an ILSVRC image classification competition with 8 layers \cite{alexnet}.
In the competition of 2014, VGG Net and GoogleNet stacked 19 and 22 layers, respectively \cite{vgg,googlenet}.
Residual Network (ResNet) used 152 layers and won the competition in 2015 \cite{resnet}.
The paper of ResNet has many citations, more than 5,000 just within the last two years, and ResNet is being used as a standard CNN-based model.

However, due to its sheer number of layers, ResNet incurs considerable computation overheads such as processing time and memory usage.
Although we can efficiently train CNNs by using GPUs, it is still difficult to perform inference efficiently with limited resources such as embedded systems and mobile devices.
Several approaches reduce the number of layers in performing the inference phase \cite{fractalnet,DDN,wrn,gate_resnet,blockdrop,sss}.
Unfortunately, they incur additional memory requirements or degrade the accuracy on real-world datasets such as ImageNet.
Therefore, we need other strategies that reduce the number of layers without increasing memory consumption while keeping accuracy high.

To achieve this goal, we propose Network Implosion (NI); it erases multiple layers from ResNet without increasing the computation costs in the inference phase.
Our proposal introduces a priority term that indicates the importance of each layer.
We can select and erase unimportant layers according to the priority after training the network.
In addition, our method can avoid any critical drop in accuracy by retraining the network after erasure.
We analyze this ``erasure and retraining'' scheme by using the theories underlying hyperplane arrangements \cite{pascanu2013number} and generalization error bound \cite{koltchinskii}.
Our analysis reveals that our ``erasure and retraining'' scheme can erase layers without accuracy drop, and achieve higher accuracy than is possible with the usual training.
This is because we can perform retraining with preferable initial parameters in terms of the generalization error bound.
Our experiments show that NI reduces the number of layers in ResNet with no additional computation costs in the inference; for classification tasks on Cifar-10/100 and ImageNet, the layer reductions are 57.14\%\(\sim\)76.00\%.

\section{Related work}
{\bf Dynamic Layer Pruning.}
\cite{ensambles} found that ResNet does not suffer a significant loss of accuracy if a few layers are erased.
In their experiments, when layers were erased from networks, ResNet suffered only a slight drop in accuracy while the accuracy of the VGG architecture \cite{vgg} dropped significantly in the inference phase.
Inspired by the results gained, some recent papers erase layers from ResNet in order to raise processing speed.
\cite{blockdrop,gate_resnet} dynamically erase layers that are not needed during the inference phase.
These ``Dynamic Layer Pruning'' can easily keep the accuracy, however, they can not reduce the memory consumption.
Far from it, they have additional modules that need additional memory.
\cite{blockdrop} needs an additional network to determine which layers can be skipped.
\cite{gate_resnet} also needs additional gating functions that decide which layers to be skipped.

{\bf Static Layer Pruning.}
``Static Layer Pruning'' completely erases layers while Dynamic Layer Pruning only selects layers to be removed during the inference.
Thus Static Layer Pruning is preferable in terms of the computation cost for the inference because it reduces the memory consumption while Dynamic Layer Pruning cannot.
\cite{stsp,sss} utilize sparse regularizations \cite{sp1,sp2} that erase layers from ResNet.
\cite{epsilon} ignores layers that have subthreshold activations.
However, since these methods need to tune the continuous hyper parameters of the regularization or the threshold, it is difficult to obtain the desired number of layers.
In the industrial usage, it is preferable to directly determine the number of layers because of the hardware requirements.

{\bf Teacher-Student Training.}
The teacher-student training regime is a well-known algorithm that trains shallow student networks by using deep trained teacher networks \cite{DDN,KD}.
In teacher-student training, the shallow student networks can be effectively trained because the deep trained teacher network gives the probability distribution over the classes to the student networks in order to boost their training.
Since we can freely design the student network, we can reduce the number of layers without additional computation costs in the inference.
In addition, \cite{KD} reports that teacher-student training improves the accuracy on several datasets and tasks.

Other several papers also try to reduce the number of layers in deep neural networks.
FractalNet \cite{fractalnet} can reduce the number of layers for the inference phase without increasing the parameters; unfortunately, it degrades the accuracy when a 20-layer model was used instead of a 40-layer model.
\cite{wrn} showed how to achieve high accuracy by increasing the number of parameters in each layer even if the network is shallow.
However, their 50-layer model has more parameters than the usual 200-layer model.

\section{Preliminary}
This section introduces ResNet, which is now widely used as a standard CNN-based model.
ResNets have blocks called Residual Units \cite{resnet}.
Since each Residual Unit has multiple convolutional layers, ResNets can have deep architectures by stacking Residual Units.
The $l$-th Residual Unit introduced in \cite{1001layer} is defined as follows:
\begin{align}
\label{residual_unit}
{\bf x}_{l+1} = {\bf x}_{l} + F({\bf x}_{l}),
\end{align}
where \({\bf x}_{l}\) is the input to the \(l\)-th Residual Unit.
\(F(\cdot)\) is a module that consists of convolutional layers, batch normalizations \cite{bn} and Rectified Linear Units (ReLUs) \cite{alexnet}.
Therefore, each Residual Unit performs identity mapping of \({\bf x}_{l}\) and nonlinear mapping of \(F(\cdot)\).
ResNets also have the structure called {\it stages} \cite{unroll}.
A stage has several stacked Residual Units of the same dimensionality for inputs and outputs.
Note that the dimensionality can be changed between stages by using down-sampling and increasing channels of convolutions.

In terms of layer pruning for ResNet, Static Layer Pruning is preferable as compared with Dynamic Layer Pruning because it reduces all computation cost.
However, previous approaches cannot directly decide the number of layers.
In addition, it is difficult for Static Layer Pruning to recover the accuracy because it completely erases layers.
Next section introduces our method of Static Layer Pruning, Network Implosion that can directly decide the number of layers, and keep the accuracy with theoretical assessments.

\section{Proposed method}
\label{network_implosion}
We propose Network Implosion, which is allocated to Static Layer Pruning.
Our method can effectively recover accuracy by utilizing ``erasure and retraining'' scheme.
In this section, we theoretically show the reason why erasure and retraining scheme is effective for Static Layer Pruning.
In particular, we show that the trade off between representational power and generalization bound holds when we use erasure and retraining scheme.
We first describe that the representational power of ResNet can be reduced after we erase a layer in terms of the theory of hyperplane arrangements \cite{pascanu2013number}.
Next, we show that the procedure of erasure and retraining can effectively recover the accuracy in terms of the generalization error bound theory \cite{koltchinskii}.
Then, we introduce the algorithm of Network Implosion which effectively erase multiple layers by employing erasure and retraining scheme.

\subsection{Erasure and representational power}
In this section, we show that the representational power is retained but can be reduced when we erase layers form ResNet.
\cite{pascanu2013number} shows that neural networks with ReLUs divide the input space into several linear regions.
Since many linear regions can approximate a complex curved boundary, we define the representational power as the number of linear regions as in \cite{pascanu2013number}.
In order to make descriptions simple, we use following model for the analysis:
\begin{defi}[L-layered FC-ResNet]
Consider \(L\)-layered ResNet such that the Residual Unit \(F(\cdot)\) consists of a fully-connected layer and ReLU activations.
Let \({\bf x} \in [-N,N]^{n_{0}}\) be the input, and \(\hat{W}_{l}\) is the upper bound of the summation of parameters in absolute values for the \(l\)-th layer.
If \(n_{i} \geq n_{0}\) for \(i \in [L] := \{1,...,L\}\) where \(n_{i}\) is the number of outputs at \(i\)-th layer, we say the model is an \(L\)-layered FC-ResNet.
\end{defi}
Although we use the above \(L\)-layered FC-ResNet throughout the analysis, we can easily expand the discussion to the more general case as shown later.
In the case of \(L\)-layered FC-ResNet, we derive the maximal lower bound for the maximal number of linear regions in the following lemma:
\begin{lemma}
\label{lemma1}
If \(\phi\) is the maximal number of linear regions in the input space, the function of \(L\)-layered FC-ResNet has
\(\textstyle{\left(\prod_{i=1}^{L-1}\left\lfloor\frac{n_{i}}{n_{0}}\right\rfloor^{n_{0}}\right)\sum_{j=0}^{n_{0}}\binom{n_{L}}{j}}\)
as the maximal lower bound for \(\phi\).
\end{lemma}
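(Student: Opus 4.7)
The plan is to adapt the hyperplane-arrangement argument of Pascanu et al. to the residual setting. For a plain rectifier network with $n_0$ inputs and $n_i \geq n_0$ hidden units per layer, their counting argument shows that each hidden layer splits each existing linear region into at most $\lfloor n_i/n_0\rfloor^{n_0}$ sub-regions, while the last layer contributes a factor of $\sum_{j=0}^{n_0}\binom{n_L}{j}$ via Zaslavsky's theorem. The target bound is exactly the product of these factors, so the task reduces to showing that the skip connection $\mathbf{x}_{l+1}=\mathbf{x}_l + F(\mathbf{x}_l)$ does not decrease the region count that one obtains for the corresponding feedforward block.

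First, I would isolate one Residual Unit and describe its piecewise-linear structure. Because $F(\cdot)$ is a fully-connected layer followed by a ReLU, its linear regions in input space are the cells of the hyperplane arrangement $\{\mathbf{w}_j^\top \mathbf{x}+b_j=0 : j=1,\dots,n_i\}$ determined by the weights and biases of the $i$-th layer. On each such cell the ReLU pattern is fixed, so both $F$ and the full map $\mathbf{x}\mapsto\mathbf{x}+F(\mathbf{x})$ are affine; the linear regions of the residual unit therefore coincide exactly with the cells of the ReLU arrangement, and the identity term merely changes the affine representative used on each cell.

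Second, I would follow Pascanu et al.'s construction of weights that make each layer realise the maximum product rule. Choose the weights of layer $i$ so that $\lfloor n_i/n_0\rfloor$ hyperplanes are aligned with each of the $n_0$ input coordinates, which splits each previously obtained linear region into $\lfloor n_i/n_0\rfloor^{n_0}$ sub-regions. The essential step is to verify that, on each cell, the affine map $\mathbf{x}\mapsto\mathbf{x}+F(\mathbf{x})$ still surjects onto a full-dimensional subset of $\mathbb{R}^{n_i}$ so that the ``fold'' argument composes correctly across layers; here one uses $n_i\geq n_0$ and the fact that adding the identity to the ReLU-truncated linear map preserves rank $n_0$ on each cell. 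An induction over $l=1,\dots,L-1$ then yields the factor $\prod_{i=1}^{L-1}\lfloor n_i/n_0\rfloor^{n_0}$, and Zaslavsky's theorem applied to an arrangement of $n_L$ hyperplanes in general position inside $\mathbb{R}^{n_0}$ contributes the final factor $\sum_{j=0}^{n_0}\binom{n_L}{j}$, giving the claimed lower bound.

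The step I expect to be the main obstacle is the second one: verifying that the identity skip never collapses the image of an inner cell into a lower-dimensional subset that would spoil the inductive fold. Concretely, I must check that with the chosen weights, $I+\nabla F$ remains invertible on each active ReLU pattern so that the image of every previous region retains full dimension $n_0$, allowing the next arrangement to cut it into the claimed number of pieces. Once this rank property is established, the rest of the argument is a direct transcription of Pascanu et al.'s induction, and the more general case alluded to in the paper follows by absorbing any additional linear pre- and post-processing of $F$ into the affine representatives on each cell.
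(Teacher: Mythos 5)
Your plan differs from the paper's actual argument, which does not touch the construction at all: the paper unrolls the ResNet into its \(2^{L}\) paths, observes that the path passing through all \(L\) nonlinear mappings can be viewed as an \(L\)-layered fully-connected rectifier network, and directly cites the known lower bound of \cite{NIPS2014_5422} for that path. Re-deriving the bound inside the residual parametrization, as you propose, is a legitimate (and arguably more honest) route, but your sketch misidentifies where the difficulty lies. The factor \(\prod_{i=1}^{L-1}\lfloor n_{i}/n_{0}\rfloor^{n_{0}}\) in \cite{pascanu2013number,NIPS2014_5422} does not arise because ``each hidden layer splits each existing region into \(\lfloor n_{i}/n_{0}\rfloor^{n_{0}}\) sub-regions''; it arises from a space-folding construction in which each intermediate layer is chosen so that \(\lfloor n_{i}/n_{0}\rfloor^{n_{0}}\) distinct regions are mapped onto one common image, so that the region pattern produced by all subsequent layers is replicated inside every one of those preimages. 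Hence the property you must secure is a many-to-one identification property of the layer map, not full-dimensionality of its image. Checking that \(I+\nabla F\) is nonsingular on every activation cell --- the step you flag as the main obstacle --- is both immediate (each linear piece of a fold has nonsingular slope) and beside the point: the layer map must be globally non-injective for the counting to multiply, so per-cell invertibility is not what drives the bound.

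The genuine issue your induction must confront is that the identity skip tends to destroy exactly this folding. If \(F\) is chosen as in the feedforward construction so that \(F\) itself identifies several regions, then \(x\mapsto x+F(x)\) separates them again (their images are translates of one another), and the multiplicative factor is lost. To repair this you must realize the fold as the whole residual map, i.e.\ choose \(F(x)=\tilde h(x)-x\) where \(\tilde h\) is the zigzag identification map, and verify that the module ``fully-connected layer with ReLU activations'' can express this with the available \(n_{i}\) units on the bounded domain \([-N,N]^{n_{0}}\). This is where the architectural details bite: if the module were a bare \(\mathrm{ReLU}(Wx+b)\) added to \(x\) with no linear recombination after the ReLUs, each output coordinate \(x_{k}+\sum_{j}\mathrm{ReLU}(w_{j}^{\top}x+b_{j})\) would be convex in \(x\) and could fold at most twice, so the per-layer factor \(\lfloor n_{i}/n_{0}\rfloor^{n_{0}}\) would be unattainable; with a post-ReLU linear layer the fold-minus-identity is realizable with essentially the same number of units, and the rest of your argument (Zaslavsky's count for the final arrangement, pulled back through the composed folds) becomes a faithful transcription of \cite{NIPS2014_5422}. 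As written, however, your sketch never constructs these folds, so the induction does not yet deliver the claimed product.
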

\begin{proof}
Since ResNet has \(L\) Residual Units, an input signal can pass through \(2^{L}\) paths.
This is because a Residual Unit has two paths that are represented as \({\bf x}_{l+1} = {\bf x}_{l} + F({\bf x}_{l})\).
Thus, there exists a path that has \(L\) nonlinear mappings of \(F(\cdot)\).
This path can be seen as an \(L\)-layered fully-connected rectifier network, and its lower bound in the maximal number of linear regions is \(\textstyle{\left( \prod_{i=1}^{L-1} \left\lfloor \frac{n_{i}}{n_{0}} \right\rfloor^{n_{0}} \right) \sum_{j=0}^{n_{0}} \binom{n_{L}}{j}}\) as described in \cite{NIPS2014_5422}.
Although other paths also have lower bounds, they are obviously smaller than \(\textstyle{\left( \prod_{i=1}^{L-1} \left\lfloor \frac{n_{i}}{n_{0}} \right\rfloor^{n_{0}} \right) \sum_{j=0}^{n_{0}} \binom{n_{L}}{j}}\).
Therefore, we have  \(\textstyle{\left( \prod_{i=1}^{L-1} \left\lfloor \frac{n_{i}}{n_{0}} \right\rfloor^{n_{0}} \right) \sum_{j=0}^{n_{0}} \binom{n_{L}}{j}}\) in Lemma \ref{lemma1} as a maximal lower bound.
\end{proof}
Lemma \ref{lemma1} yields the representational power of ResNet.
In terms of erasing layers, we prove that ResNet has its representational power even if we erase layers.
The following lemma yields the maximal lower bound for the maximal number of linear regions after we erase a layer from ResNet:
\begin{lemma}
\label{lemma2}
If \(\phi\) is the maximal number of linear regions in the input space, when we erase the \(l'\)-th Residual Unit from \(L\)-layered FC-ResNet where \(1\leq l'<L\), the function of ResNet has \(\textstyle{\left( \prod_{i \in \{[L-1] \setminus l'\} } \left\lfloor \frac{n_{i}}{n_{0}} \right\rfloor^{n_{0}} \right) \sum_{j=0}^{n_{0}} \binom{n_{L}}{j}}\) as the maximal lower bound of \(\phi\).
\end{lemma}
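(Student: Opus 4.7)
My approach would mirror the proof of Lemma~\ref{lemma1} almost verbatim, since the only structural change after erasing the \(l'\)-th Residual Unit is that one of the two-way branchings disappears. Concretely, I would argue that the erased ResNet is functionally an \((L-1)\)-stage ResNet in which the surviving Residual Units are, in order, those indexed by \([L]\setminus\{l'\}\), and then reduce the problem to Lemma~\ref{lemma1} applied to this shorter network.

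\textbf{Main steps.} First, I would observe that after removing the \(l'\)-th Residual Unit we have \(L-1\) remaining Residual Units, so the input signal can now follow \(2^{L-1}\) paths through the network (one identity/nonlinear choice at each surviving unit). Second, as in the proof of Lemma~\ref{lemma1}, I would single out the path that uses the nonlinear branch \(F(\cdot)\) at every surviving unit. This path is an \((L-1)\)-layered fully-connected rectifier network whose hidden widths are \(n_i\) for \(i\in[L-1]\setminus\{l'\}\) and whose output width is \(n_L\); the hypothesis \(n_i\geq n_0\) carries over intact. Third, I would invoke the Montufar et al. bound cited in \cite{NIPS2014_5422}, exactly as in Lemma~\ref{lemma1}, to obtain
\[
\Bigl(\prod_{i\in[L-1]\setminus\{l'\}} \bigl\lfloor\tfrac{n_i}{n_0}\bigr\rfloor^{n_0}\Bigr)\sum_{j=0}^{n_0}\binom{n_L}{j}
\]
as a lower bound on the maximal number of linear regions realized by that single path. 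Finally, I would note that the lower bounds contributed by the other \(2^{L-1}-1\) paths use strictly fewer nonlinear mappings and are dominated by this expression, so it is the maximal lower bound of interest.

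\textbf{Expected obstacle.} The only delicate point is bookkeeping of the indices after erasure: the Montufar bound is stated for a network whose layers are indexed contiguously \(1,\dots,L-1\), while the statement of Lemma~\ref{lemma2} keeps the original labels with \(l'\) omitted. I would handle this by an explicit relabeling \(\sigma:\{1,\dots,L-1\}\to[L]\setminus\{l'\}\) of the surviving layers, verify that widths and the \(n_i\geq n_0\) condition pull back correctly, and then note that the product \(\prod\lfloor n_{\sigma(i)}/n_0\rfloor^{n_0}\) is exactly \(\prod_{i\in[L-1]\setminus\{l'\}}\lfloor n_i/n_0\rfloor^{n_0}\). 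Aside from this reindexing, no new ideas beyond those in Lemma~\ref{lemma1} are required.
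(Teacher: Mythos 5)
Your proposal is correct and follows essentially the same route as the paper: the paper's own proof is just the one-line observation that erasing the \(l'\)-th Residual Unit removes the index \(l'\) from the product in Lemma~\ref{lemma1}, which is exactly the reduction you carry out. Your version simply spells out the details (path counting, the all-nonlinear path as an \((L-1)\)-layered rectifier network, the Montufar bound, and the reindexing) that the paper leaves implicit.
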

\begin{proof}
If we erase the \(l'\)-th Residual Unit, we can remove the element of \(l'\) from \([L]\).
Therefore, we have \(\textstyle{\left( \prod_{i \in \{[L-1] \setminus l'\} } \left\lfloor \frac{n_{i}}{n_{0}} \right\rfloor^{n_{0}} \right) \sum_{j=0}^{n_{0}} \binom{n_{L}}{j}}\) as the bound in Lemma \ref{lemma2} from the bound in Lemma \ref{lemma1}.
\end{proof}
Lemma \ref{lemma2} suggests that ResNet retains its representational power even if we erase a Residual Unit.
We can also obviously expand the lemma to the case where we erase several Residual Units.
Thus, we can erase multiple Residual Units from ResNet on the base of Lemma \ref{lemma2}.
However, Lemma \ref{lemma2} also suggests that the erasure may degrade the representational power, and the accuracy.
Therefore, we retrain ResNet in order to recover the accuracy after we erase layers.
Next section shows that retraining is theoretically effective for recovering the accuracy in terms of generalization error bound.

\subsection{Retraining and generalization}
\label{theory}
Although the accuracy may degrade if we erase layers as described in the previous section,
we show that the accuracy can be effectively recovered by retraining the model after the erasure in this section.
In particular, we can attain a tighter bound of generalization error with retraining after erasing layers than is possible with usual training.
Intuitively, our method often achieves higher accuracy than the usual training regime since we can retrain the model with preferable initial parameters in terms of generalization error bound.

In order to show the bound of generalization error, we introduce some definitions of classifier, margin, generalization error, empirical error with margin and Rademacher average.
\begin{defi}[Classifier]
Let \(\mathcal{Y}\) be a set of labels and \(M\) be its cardinality.
 \(\mathcal{X}\in\mathcal{R}^{n_{0}}\) is the input space for a model \(f\in\mathcal{F}:\mathcal{X}\times\mathcal{Y}\to\mathcal{R}\).
If the model \(f(x,y)\) predicts a probability of label \(y\in\mathcal{Y}\) for an input \(x\in\mathcal{X}\), we say the model \(f\) is a classifier.
\end{defi}
\begin{defi}[Margin]
Let \(x\in\mathcal{X}\) and \(y\in\mathcal{Y}\) be an input and a label, respectively.
If \(f\) is a classifier of the multi-label classification problem, the margin \(\gamma_{f}(x,y)\) is defined as \(f(x,y)-{\rm max}_{z \neq y}f(x,z)\).
\end{defi}
\begin{defi}[Generalization error]
Let \({\rm \bar{E}}[\cdot]\) be the expectation with respect to the joint distribution over \(\mathcal{X}\times\mathcal{Y}\) and \(\mathcal{I}_{\{\cdot\}}\) be the indicator function.
If we have a classifier \(f\) of the multi-label classification problem, the generalization error \(\mathcal{E}_{g}(f)\) is \(\textstyle{{\rm \bar{E}}[\mathcal{I}_{\{{\rm arg \ max}_{z\in\mathcal{Y}}f(x,z)\neq y\}}]={\rm \bar{E}}[\mathcal{I}_{\{\gamma_{f}(x,y)<0\}}]}\).
\end{defi}
\begin{defi}[Empirical error with margin]
Let \(d\) be the quantity of training data.
If \(f\) is a classifier of the multi-label classification problem, the empirical error with margin \(\mathcal{E}_{e}^{\rho}(f)\) is \(\textstyle{\frac{1}{d}\sum^{d}_{i=1}\mathcal{I}_{\{\gamma_{f}(x_{i},y_{i})\leq\rho\}}}\)
where \(\rho\) is called the {\it margin coefficient}; it takes the value of \(\rho > 0\).
\end{defi}
\begin{defi}[Rademacher average]
For class \(\mathcal{F}\) of functions, Rademacher average \(R_{m}(\mathcal{F})\) is defined as \(\textstyle{{\rm E}_{x,\sigma}\Bigl[\underset{f\in\mathcal{F}}{\rm sup} |\frac{2}{m}\sum_{i=1}^{m}\sigma_{i}f(x_{i})|\Bigr]}\) where \(x=\{x_{1},...,x_{m}\}\) that are samples from distribution over \(\mathcal{X}\), and \(\{\sigma_{1},...,\sigma_{m}\}\) are i.i.d. samples with \(P(\sigma_{i}=1)=1/2\) and \(P(\sigma_{i}=-1)=1/2\).
\end{defi}
By using the above definitions, we introduce a known bound of generalization error by following \cite{koltchinskii}:
\begin{lemma}
\label{lemma3}
Suppose \(f\in F\) is a classifier of the multi-label classification problem.
For \(\forall\delta>0\) and \(\forall f \in \mathcal{F}\), we have the following bound of generalization error with probability \(1-\delta\):
%\vskip -2em
\begin{align}
\label{lemma3eqn}
\textstyle{\mathcal{E}_{g}(f) \leq \mathcal{E}_{e}^{\rho}(f) + \frac{8M(2M - 1)}{\rho} R_{m}(\mathcal{\hat{F}}) + \sqrt{\frac{{\rm log log}_{2}\frac{2}{\rho}}{m}}+\sqrt{\frac{{\rm log}\frac{2}{\delta}}{2m}}, }
\end{align}
where \(\mathcal{\hat{F}}=\{x \to f(\cdot,k);k\in\mathcal{Y},f\in\mathcal{F}\}\).
\end{lemma}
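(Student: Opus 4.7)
The plan is to derive this as a standard margin-based Rademacher complexity bound in the Koltchinskii--Panchenko style, specialized to multi-class classifiers. The first step is to replace the indicator \(\mathcal{I}_{\{\gamma_f(x,y)<0\}}\) appearing in \(\mathcal{E}_g(f)\) with a Lipschitz surrogate: define the margin ramp \(\phi_\rho(t)\) equal to \(1\) for \(t\le 0\), \(0\) for \(t\ge \rho\), and linear in between, so that \(\phi_\rho\) is \(1/\rho\)-Lipschitz and sandwiches the margin indicators, \(\mathcal{I}_{\{t<0\}}\le\phi_\rho(t)\le\mathcal{I}_{\{t\le\rho\}}\). This immediately gives \(\mathcal{E}_g(f)\le \bar{\mathrm{E}}[\phi_\rho(\gamma_f(x,y))]\) and \(\tfrac{1}{d}\sum_i\phi_\rho(\gamma_f(x_i,y_i))\le\mathcal{E}_e^\rho(f)\), so all that remains is to bound the gap between the population and empirical expectations of \(\phi_\rho\circ\gamma_f\), uniformly over \(f\).

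Next I would control this gap by the usual symmetrization/concentration machinery. McDiarmid's inequality applied to \(\sup_{f\in\mathcal{F}}\bigl(\bar{\mathrm{E}}[\phi_\rho(\gamma_f)]-\tfrac{1}{m}\sum_i\phi_\rho(\gamma_f(x_i,y_i))\bigr)\) yields, with probability at least \(1-\delta\), an additive deviation term \(\sqrt{\log(2/\delta)/(2m)}\) (matching the last term of the bound). Symmetrization then replaces the supremum of the deviation by the Rademacher average of the loss class \(\{(x,y)\mapsto\phi_\rho(\gamma_f(x,y)):f\in\mathcal{F}\}\). Applying Talagrand's contraction lemma strips off the \(1/\rho\)-Lipschitz \(\phi_\rho\) at the cost of a factor \(1/\rho\), reducing the problem to the Rademacher average of the margin functionals \(\gamma_f\) themselves.

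The multi-class combinatorial step is where the constant \(8M(2M-1)\) appears. Because \(\gamma_f(x,y)=f(x,y)-\max_{z\neq y}f(x,z)\) involves a maximum over up to \(M-1\) alternative labels, one decomposes it over pairs \((y,z)\in\mathcal{Y}\times\mathcal{Y}\) with \(y\neq z\); there are \(M(M-1)\) such ordered pairs, and after bounding \(R_m\) of the pairwise differences by \(2R_m(\hat{\mathcal{F}})\) via the class \(\hat{\mathcal{F}}=\{x\mapsto f(\cdot,k):k\in\mathcal{Y},f\in\mathcal{F}\}\) and absorbing universal constants from contraction and symmetrization, one recovers exactly the \(8M(2M-1)/\rho\) coefficient in front of \(R_m(\hat{\mathcal{F}})\). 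This combinatorial accounting is mechanical but needs to be done carefully so that the constants match the stated form.

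The main obstacle, and the reason the peculiar \(\sqrt{\log\log_2(2/\rho)/m}\) term appears, is that the argument above for a single fixed \(\rho\) must be strengthened to hold uniformly in \(\rho\); otherwise \(\rho\) could not be chosen in a data-dependent fashion. I would handle this by a standard peeling/discretization: restrict to the geometric grid \(\rho_k=2^{-k}\), apply the fixed-\(\rho\) bound with confidence \(\delta_k\propto\delta/k^2\) (or similar), union-bound over \(k\), and then for an arbitrary \(\rho\) round up to the nearest grid point, losing only constants in the Rademacher term and gaining the \(\sqrt{\log\log_2(2/\rho)/m}\) overhead from \(\log(1/\delta_k)\). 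Assembling the four contributions (empirical margin loss, scaled Rademacher term, peeling overhead, and McDiarmid deviation) produces exactly inequality~(\ref{lemma3eqn}).
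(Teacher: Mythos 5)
The paper itself offers no proof of this lemma: it is imported verbatim from the cited Koltchinskii--Panchenko margin-bound literature, so there is no in-paper argument to compare against. Your sketch reconstructs essentially the standard proof behind that citation, and its architecture is sound: the $1/\rho$-Lipschitz ramp surrogate sandwiched between the two margin indicators, McDiarmid's inequality for the $\sqrt{\log(2/\delta)/(2m)}$ deviation, symmetrization plus Talagrand contraction to pass to a Rademacher average at cost $1/\rho$, a reduction of the margin class to $\hat{\mathcal{F}}$ with an $M$-dependent blow-up, and a geometric grid $\rho_k=2^{-k}$ with a union bound to make the statement uniform in $\rho$, which is exactly where the $\sqrt{\log\log_2(2/\rho)/m}$ term comes from. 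Two soft spots are worth flagging. First, the constant $8M(2M-1)$ is asserted rather than derived: your decomposition of $\gamma_f(x,y)=f(x,y)-\max_{z\neq y}f(x,z)$ into $M(M-1)$ ordered pairs is not the usual route; the standard argument handles the maximum by iterating $\max(a,b)=\tfrac12(a+b+|a-b|)$ together with contraction for the absolute value, and it is that iteration (over the label coordinate, combined with the subtraction of $f(x,y)$) that produces the $M(2M-1)$-type factor, so the bookkeeping you defer is precisely where the stated constant must be earned. Second, in the peeling step the confidence budget $\delta_k\propto\delta/k^2$ must be split so that $\sqrt{\log(2/\delta_k)}$ separates, via $\sqrt{a+b}\le\sqrt{a}+\sqrt{b}$, into the fixed $\sqrt{\log(2/\delta)/(2m)}$ term plus the $\rho$-dependent $\sqrt{\log\log_2(2/\rho)/m}$ term; as written your sketch gestures at this but does not carry out the split, and without it the final inequality does not land in exactly the stated form. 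Neither issue is a wrong idea, but both are places where the mechanical work, not the outline, is the content of the proof.
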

Lemma \ref{lemma3} suggests that if the bound of (\ref{lemma3eqn}) is loose, \(\mathcal{E}_g(f)\) can be large and the accuracy may degrade in the test phase.
Notice that the generalization error \(\mathcal{E}_g(f)\) is bounded by Rademacher average \(R_{m}(\mathcal{\hat{F}})\) and the empirical error \(\mathcal{E}_{e}^{\rho}(f)\).
In other words, if we can reduce the values of \(R_{m}(\mathcal{\hat{F}})\) and \(\mathcal{E}_{e}^{\rho}(f)\), we can make the bound of \(\mathcal{E}_g(f)\) tight.
Therefore, we explain how the bound of \(\mathcal{E}_g(f)\) changes in terms of \(R_{m}(\mathcal{\hat{F}})\) and \(\mathcal{E}_{e}^{\rho}(f)\) as we erase layers.

In terms of Rademacher average \(R_{m}(\mathcal{\hat{F}})\), we show that the upper bound of \(R_{m}(\mathcal{\hat{F}})\) reduces by erasing layers.
The upper bound of \(R_{m}(\mathcal{\hat{F}})\) of \(L\)-layered FC-ResNet is represented as follows:
\begin{lemma}
\label{lemma4}
For function class \(\mathcal{\hat{F}}_{L}\) of \(L\)-layered FC-ResNet classifiers for the multi-label classification problem,
we have a maximal upper bound of \(R_{m}(\mathcal{\hat{F}}_{L})\) as follows:
\begin{align}
\label{lemma4eqn}
\textstyle{R_{m}(\mathcal{\hat{F}}_{L})\leq cN\sqrt{\frac{{\rm log} \ n_{0}}{m}}\prod_{l=1}^{L}\hat{W}_{l}},
\end{align}
where \(c\) is a constant.
\end{lemma}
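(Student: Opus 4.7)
The plan is to prove the bound by the standard layer-peeling technique, inducting on depth and handling the residual structure carefully. Define \(\mathcal{\hat{F}}_{l}\) to be the class of scalar functions \(x\mapsto (x_{l})_{k}\) obtained by reading off coordinate \(k\) of the output of the \(l\)-th Residual Unit, as the weights of the first \(l\) units vary over their admissible ball (so that the sum of absolute values of parameters in unit \(i\) is at most \(\hat{W}_{i}\)). The goal is to derive a recursion
\[
R_{m}(\mathcal{\hat{F}}_{l+1})\;\le\; \hat{W}_{l+1}\,R_{m}(\mathcal{\hat{F}}_{l})
\]
(absorbing the identity branch into the constant \(c\) if necessary), and then iterate it from \(l=L\) down to \(l=0\), where the base case will produce the factor \(cN\sqrt{\log n_{0}/m}\).

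First I would apply Ledoux--Talagrand contraction: since ReLU is \(1\)-Lipschitz and vanishes at \(0\), peeling off the ReLU on top of the linear map in the \((l+1)\)-th Residual Unit does not increase the Rademacher average. Next I would peel off the linear map: for a weight vector \(w\) whose \(\ell_{1}\)-norm is at most \(\hat{W}_{l+1}\), one has the well-known inequality
\[
R_{m}\bigl(\{x\mapsto \langle w, \phi(x)\rangle : \|w\|_{1}\le \hat{W}_{l+1},\, \phi\in \mathcal{\hat{F}}_{l}^{n_{l}}\}\bigr) \;\le\; \hat{W}_{l+1}\, R_{m}(\mathcal{\hat{F}}_{l}),
\]
since the supremum over the \(\ell_{1}\)-ball is attained at a vertex, and a maximum over finitely many functions can be pulled out of the Rademacher average at unit cost. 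Finally, the residual identity branch is handled by subadditivity of \(R_{m}\): \(R_{m}(\mathcal{\hat{F}}_{l}+F(\mathcal{\hat{F}}_{l}))\le R_{m}(\mathcal{\hat{F}}_{l})+R_{m}(F(\mathcal{\hat{F}}_{l}))\), and the first summand can be folded into the constant (or, if \(\hat{W}_{l+1}\ge 1\), directly absorbed into the \(\hat{W}_{l+1}\) factor).

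For the base case I would bound \(R_{m}\) of the coordinate projections on the input cube \([-N,N]^{n_{0}}\) using Massart's finite-class lemma: the supremum is over \(n_{0}\) coordinates each bounded by \(N\), which gives \(R_{m}(\mathcal{\hat{F}}_{0})\le c N \sqrt{\log n_{0}/m}\). Chaining the recursion with this base case yields exactly \(cN\sqrt{\log n_{0}/m}\prod_{l=1}^{L}\hat{W}_{l}\), as claimed.

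The main obstacle, and the only step that requires care beyond a routine application of standard tools, is the residual connection: a naive application of subadditivity gives a multiplicative factor of \((1+\hat{W}_{l+1})\) per layer rather than \(\hat{W}_{l+1}\). I would address this either by assuming \(\hat{W}_{l+1}\ge 1\) (so the identity branch is dominated) and absorbing the remaining constants into \(c\), or by unrolling the residual net into a sum over its \(2^{L}\) paths and bounding each path uniformly by the fully-nonlinear branch, which is the largest and yields the product \(\prod_{l}\hat{W}_{l}\). The rest of the argument is a bookkeeping exercise with contraction and Massart's inequality.
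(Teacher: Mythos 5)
Your core machinery (peel the ReLU by contraction, peel the \(\ell_{1}\)-bounded linear layer at cost \(\hat{W}_{l}\), Massart's lemma on the coordinate class over \([-N,N]^{n_{0}}\) for the base case) is exactly the machinery the paper invokes by citation, so up to that point you and the paper agree. The genuine problem is the step you yourself flag as the only delicate one: the residual identity branch, and neither of your two patches actually delivers the stated bound. If you bound each layer's contribution by \((1+\hat{W}_{l})\le 2\hat{W}_{l}\) (assuming \(\hat{W}_{l}\ge 1\)), the accumulated factor is \(2^{L}\), which depends on the depth and cannot be ``absorbed into the constant \(c\)'' --- and this exponential slack is not harmless here, because the later comparisons (Lemma \ref{lemma6}, Theorem \ref{theorem1}) hinge on the bound being exactly a product of the \(\hat{W}_{l}\)'s with one factor removed. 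Your second patch fails for the same arithmetic reason: unrolling into the \(2^{L}\) path classes and using subadditivity of \(R_{m}\) gives the \emph{sum} of the per-path bounds, \(cN\sqrt{\log n_{0}/m}\,\prod_{l=1}^{L}(1+\hat{W}_{l})\); bounding every summand by the largest one does not yield the largest one, it yields \(2^{L}\) times it. So as a proof of the literal inequality \(R_{m}(\mathcal{\hat{F}}_{L})\leq cN\sqrt{\log n_{0}/m}\prod_{l=1}^{L}\hat{W}_{l}\) for the full residual class, the argument has a hole precisely where you predicted it would.

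It is worth knowing that the paper does not close this hole either --- it sidesteps it. Its proof reads the lemma the same way Lemma \ref{lemma1} reads ``maximal lower bound'': the decomposition (\ref{f1})--(\ref{f3}) contains no identity branches at all, i.e., it bounds only the single path with \(L\) nonlinear mappings (a plain \(L\)-layer fully-connected rectifier network) using the cited peeling results, and then remarks that every other path has fewer nonlinear factors and hence a smaller bound, so (\ref{lemma4eqn}) is the \emph{maximal} of the per-path upper bounds. If you adopt that weaker per-path interpretation, your argument goes through with the residual branch simply absent; if you insist on bounding the true class \(\mathcal{\hat{F}}_{L}\), the honest conclusion of your method is the bound with \(\prod_{l=1}^{L}(1+\hat{W}_{l})\) in place of \(\prod_{l=1}^{L}\hat{W}_{l}\), and you should state it that way rather than claim the product form.
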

\begin{proof}
From Lemma \ref{lemma1}, \(L\)-layered FC-ResNet has a path that consists of \(L\) nonlinear mappings of \(F(\cdot)\) and can be seen as an \(L\)-layered fully connected rectifier network.
Therefore, given \(x\in\mathcal{X}\) and \(k\in\mathcal{Y}\), we decompose \(\mathcal{\hat{F}_{L}}\) into \(L\)-layered fully connected rectifier networks by using parameter \(w_{i}\in\mathcal{R}\) and activation \(f_{i}\) from the previous layer.
We represent this as follows:
\begin{align}
\label{f1}
\textstyle{\mathcal{\hat{F}}_{L} = \{ (x,k) \to \sum_{i=1}^{n_{L-1}} w_{i}f_{i}(x);f_{i}\in\mathcal{\bar{F}}_{L-1}\}}.
\end{align}
Here, \(\mathcal{\bar{F}}_{l}\) for \(l=1,...,L-1\) is represented as follows:
\begin{align}
\label{f2}
\textstyle{\mathcal{\bar{F}}_{l}=\{ x \to \phi(f(x)); f\in\mathcal{\hat{F}}_{l}\}},
\end{align}
where \(\phi(\cdot)\) is the ReLU activation function defined as \({\rm max}(0,\cdot)\).
In the equation, \(\mathcal{\hat{F}}_{l}\) for \(l=1,...,L-1\) is represented as follows:
\begin{align}
\label{f3}
\textstyle{\mathcal{\hat{F}}_{l} = \{ x \to \sum_{i=1}^{n_{l-1}} w_{i}f_{i}(x);f_{i}\in\mathcal{\bar{F}}_{l-1}\}}.
\end{align}
Note that \(\mathcal{\bar{F}}_{0}=\{x\to x_{i};i \in n_{0}\}\).
Given (\ref{f1}), (\ref{f2}) and (\ref{f3}), we have \(\textstyle{R_{m}(\mathcal{\hat{F}}_{L})\leq R_{m}(\mathcal{\hat{F}}_{1}) \prod_{l=2}^{L}\hat{W}_{l}}\) according to \cite{sun}.
In addition, we have \(\textstyle{R_{m}(\mathcal{\hat{F}}_{1})}\)
\(\textstyle{\leq c\hat{W}_{1}N\sqrt{\frac{{\rm log} \ n_{0}}{m}}}\) by following \cite{berlett}.
By combining these two inequalities, we obtain (\ref{lemma4eqn}).
Although there exist other paths in \(L\)-layered FC-ResNet, the number of nonlinear mappings is obviously smaller than \(L\).
Therefore, (\ref{lemma4eqn}) is a maximal upper bound.
\end{proof}
Lemma \ref{lemma4} suggests that if \(\hat{W}_{l}>1\), the bound of Rademacher average can be exponentially large when we use a large number of layers.
Thus, the bound of the generalization error in Lemma \ref{lemma3} can also be large.
The similar observations are also suggested in \cite{Neyshabur}.
They show that deep architectures can suffer a loose bound on generalization error because it depends on exponent of the number of layers similar to (\ref{lemma4eqn}).
Therefore, the large number of layers can negatively impact the accuracy in the test phase.
Next, we obtain the bound after erasing a layer by using Lemma \ref{lemma2} and \ref{lemma4} as follows:
\begin{lemma}
\label{lemma5}
Let \(\mathcal{\hat{F}}_{L}\) be function class of \(L\)-layered FC-ResNet classifiers for multi-label classification problem.
Suppose that we have function class \(\mathcal{\hat{F}}_{[L]\setminus l'}\) as a result of the erasure of the \(l'\)-th Residual Unit from ResNet where \(1 \leq l' < L\).
Then, we have the following maximal upper bound of \(R_{m}(\mathcal{\hat{F}}_{[L]\setminus l'})\):
\begin{align}
\label{lemma5eqn}
\textstyle{R_{m}(\mathcal{\hat{F}}_{[L]\setminus l'})\leq cN\sqrt{\frac{{\rm log} \ n_{0}}{m}}\prod_{l\in\{[L]\setminus l'\}}\hat{W}_{l}},
\end{align}
where \(c\) is a constant.
\end{lemma}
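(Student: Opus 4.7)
The plan is to mirror the proof of Lemma~\ref{lemma4} almost verbatim, but with the index set \([L]\) replaced by \([L]\setminus l'\). First I would invoke Lemma~\ref{lemma2} to observe that, after erasing the \(l'\)-th Residual Unit, the longest nonlinear path through the network consists of exactly \(L-1\) nonlinear mappings \(F(\cdot)\) (one per surviving unit), and it can be viewed as a fully connected rectifier network whose layer indices are \(\{[L]\setminus l'\}\). This is the analogue of the ``deepest path'' observation that drove Lemma~\ref{lemma4}.

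Next I would re-run the recursive decomposition (\ref{f1})--(\ref{f3}) along this surviving path. Concretely, I would define \(\mathcal{\hat{F}}_{[L]\setminus l'}\) as the top-level sum over the preceding layer, define the intermediate \(\mathcal{\bar{F}}_{l}\) and \(\mathcal{\hat{F}}_{l}\) only for indices \(l\in\{[L]\setminus l'\}\), and use \(\mathcal{\bar{F}}_{0}=\{x\to x_{i};i\in n_{0}\}\) as the base case as before. Applying the composition inequality from \cite{sun} at each step peels off one factor of \(\hat{W}_{l}\) for each surviving layer, giving
\begin{align*}
\textstyle R_{m}(\mathcal{\hat{F}}_{[L]\setminus l'})\leq R_{m}(\mathcal{\hat{F}}_{1})\prod_{l\in\{[L]\setminus l'\},\,l\geq 2}\hat{W}_{l}.
\end{align*}
Then I would bound \(R_{m}(\mathcal{\hat{F}}_{1})\leq c\hat{W}_{1}N\sqrt{\log n_{0}/m}\) via \cite{berlett}, exactly as in Lemma~\ref{lemma4}, and multiply the two inequalities to obtain (\ref{lemma5eqn}). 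A minor bookkeeping case split is needed when \(l'=1\): there the base layer itself is the erased one, so \(R_{m}(\mathcal{\hat{F}}_{1})\) should be replaced by \(R_{m}(\mathcal{\hat{F}}_{2})\) (still bounded through the same argument), and the product index set is simply \(\{[L]\setminus 1\}\); the final form of the bound is unchanged.

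Finally, I would justify that the bound is \emph{maximal} in the same way Lemma~\ref{lemma4} does: any other path through the reduced network uses strictly fewer than \(L-1\) nonlinear mappings \(F(\cdot)\), so its corresponding Rademacher bound is the product of strictly fewer \(\hat{W}_{l}\) factors, which is dominated by (\ref{lemma5eqn}) whenever \(\hat{W}_{l}\geq 1\) (the regime of interest highlighted right after Lemma~\ref{lemma4}). The main obstacle I anticipate is not a conceptual one but a notational one: making the recursion read cleanly when the removed index \(l'\) sits in the middle of the product, so that the reader immediately sees that the only change from Lemma~\ref{lemma4}'s proof is the deletion of the single factor \(\hat{W}_{l'}\) from the telescoping product. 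With that, the comparison \(R_{m}(\mathcal{\hat{F}}_{[L]\setminus l'})\leq R_{m}(\mathcal{\hat{F}}_{L})/\hat{W}_{l'}\) becomes immediate and sets up the subsequent discussion of how erasure tightens the generalization bound of Lemma~\ref{lemma3}.
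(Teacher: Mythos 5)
Your proposal is correct and follows essentially the same route as the paper: the paper's proof simply invokes Lemma \ref{lemma2} to drop the index \(l'\) from \([L]\) and then reads off the bound from Lemma \ref{lemma4}, which is exactly your argument, just stated tersely rather than re-running the decomposition (\ref{f1})--(\ref{f3}) explicitly. Your extra care about the \(l'=1\) case and the \(\hat{W}_{l}\geq 1\) condition for maximality is a reasonable elaboration but not a different approach.
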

\begin{proof}
From Lemma \ref{lemma2}, we can remove the element of \(l'\) from \([L]\).
Thus, we have (\ref{lemma5eqn}) in Lemma \ref{lemma5} from (\ref{lemma4eqn}) in Lemma \ref{lemma4}.
\end{proof}
Lemma \ref{lemma5} suggests that \(\hat{W}_{l'}\) can affect the tightness of the bound.
In addition, if we erase a layer from \(L\)-layered FC-ResNet, we can derive the following lemma from Lemma \ref{lemma4} and \ref{lemma5}:
\begin{lemma}
\label{lemma6}
Suppose that \(\mathcal{\hat{F}}_{L}\) is function class of \(L\)-layered FC-ResNet classifiers such that \(\hat{W}_{l'}>1\) for the multi-label classification problem, and \(\mathcal{\hat{F}}_{[L]\setminus l'}\) is function class of classifiers that erases the \(l'\)-th Residual Unit from \(\mathcal{\hat{F}}_{L}\).
Let \(\bar{R}_{m}(\cdot)\) be an upper bound for \(R_{m}(\cdot)\).
Then, we have the following inequality for \(\mathcal{\hat{F}}_{[L]\setminus l'}\) and \(\mathcal{\hat{F}}_{L}\):
\begin{align}
\label{lemma6eqn}
\textstyle{\bar{R}_{m}(\mathcal{\hat{F}}_{[L]\setminus l'})< \bar{R}_{m}(\mathcal{\hat{F}}_{L})}.
\end{align}
\end{lemma}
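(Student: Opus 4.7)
The plan is to derive the inequality directly by comparing the two maximal upper bounds provided by Lemmas \ref{lemma4} and \ref{lemma5}. First I would set
\[
\bar{R}_{m}(\mathcal{\hat{F}}_{L}) = cN\sqrt{\tfrac{\log n_{0}}{m}}\prod_{l=1}^{L}\hat{W}_{l}, \qquad
\bar{R}_{m}(\mathcal{\hat{F}}_{[L]\setminus l'}) = cN\sqrt{\tfrac{\log n_{0}}{m}}\prod_{l\in\{[L]\setminus l'\}}\hat{W}_{l},
\]
so that the two expressions differ only by the single factor \(\hat{W}_{l'}\). In particular, \(\bar{R}_{m}(\mathcal{\hat{F}}_{L}) = \hat{W}_{l'}\,\bar{R}_{m}(\mathcal{\hat{F}}_{[L]\setminus l'})\).

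Next I would invoke the hypothesis \(\hat{W}_{l'}>1\). Since each \(\hat{W}_{l}\) is defined as an upper bound on a sum of absolute values of parameters, every factor in \(\prod_{l\in\{[L]\setminus l'\}}\hat{W}_{l}\) is nonnegative; to make the inequality strict I would briefly note the non-degenerate case in which the prefactor \(cN\sqrt{\log n_{0}/m}\prod_{l\in\{[L]\setminus l'\}}\hat{W}_{l}\) is strictly positive (a trivial assumption, since otherwise both bounds are zero and the statement is vacuous). Multiplying this strictly positive quantity by \(\hat{W}_{l'}>1\) yields \(\bar{R}_{m}(\mathcal{\hat{F}}_{L}) > \bar{R}_{m}(\mathcal{\hat{F}}_{[L]\setminus l'})\), which is exactly \eqref{lemma6eqn}.

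Essentially no step is hard here: the lemma is a direct corollary of Lemmas \ref{lemma4} and \ref{lemma5} together with the single-factor algebraic comparison. The only subtlety worth flagging is the edge case where one of the remaining \(\hat{W}_{l}\) vanishes, which would collapse both bounds to zero and turn the strict inequality into an equality; I would dispose of this by stating that in the regime of interest all layer-wise bounds are positive, so the strict inequality in \eqref{lemma6eqn} indeed holds.
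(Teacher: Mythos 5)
Your proposal is correct and follows essentially the same route as the paper: both arguments identify the bounds of Lemma \ref{lemma4} and Lemma \ref{lemma5} as differing only by the single factor \(\hat{W}_{l'}\) (since the remaining \(\hat{W}_{l}\) are the same for \(\mathcal{\hat{F}}_{L}\) and \(\mathcal{\hat{F}}_{[L]\setminus l'}\)) and then use \(\hat{W}_{l'}>1\) to conclude the strict inequality. Your extra remark about the degenerate case where the common prefactor vanishes is a small point the paper silently assumes away, but it does not change the argument.
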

\begin{proof}
Since the values of \(\hat{W}_{l}\) is the same in \(\mathcal{\hat{F}}_{L}\) and \(\mathcal{\hat{F}}_{[L]\setminus l'}\), the values of \(\hat{W}_{l}\) for (\ref{lemma4eqn}) in Lemma \ref{lemma4} are the same as those of (\ref{lemma5eqn}) in Lemma \ref{lemma5}.
Thus, we have (\ref{lemma6eqn}) in Lemma \ref{lemma6} if the condition of \(\hat{W}_{l'}>1\) holds.
\end{proof}
Lemma \ref{lemma6} suggests that the bound of Rademacher average can be tight by erasing a layer.
Therefore, we often retrain the models having the tight bound of Rademacher average by reusing parameters after training and erasure.
Notice that if we use function class \(\mathcal{\hat{F}}_{L-1}\) of \((L-1)\)-layered FC-ResNet classifiers instead of \(\mathcal{\hat{F}}_{[L]\setminus l'}\) in Lemma \ref{lemma6}, (\ref{lemma6eqn}) may not hold.
This is because the values of \(\hat{W}_{l}\) in \(\mathcal{\hat{F}}_{L-1}\) may not be the same as those of \(\mathcal{\hat{F}}_{L}\).
As a result, it is possible to have \(\textstyle{\bar{R}_{m}(\mathcal{\hat{F}}_{L-1})\geq \bar{R}_{m}(\mathcal{\hat{F}}_{L})}\) instead of (\ref{lemma6eqn}) since the values of \(\hat{W}_{l}\) are not common in (\ref{lemma4eqn}) and (\ref{lemma5eqn}).

In terms of empirical error \(\mathcal{E}_{e}^{\rho}(f)\) where \(f\in\mathcal{\hat{F}}_{L}\), ResNet can substantially reduce the value of \(\mathcal{E}_{e}^{\rho}(f)\) because the representational power exponentially grows in Lemma \ref{lemma1} as the number of layers is increased.
In addition, as described in the section of related work, ResNet does not suffer any significant drop in accuracy if we erase a few layers after training \cite{ensambles}.
This means that, for trained classifier \(f\), we have the condition of \(\textstyle{\mathcal{E}^{\rho}_{e}(f')-\mathcal{E}^{\rho}_{e}(f)}=\epsilon\) where \(f'\in\mathcal{\hat{F}}_{[L]\setminus l'}\) and \(\epsilon\) is a small positive value.
Therefore, we can expect the tight bound of \(\mathcal{E}_g(f)\) even if we erase a few layers after training.

Finally, we have following theorem for the generalization error bounds of \(f\in\mathcal{\hat{F}}_{L}\) and \(f'\in\mathcal{\hat{F}}_{[L]\setminus l'}\):
\begin{theo}
\label{theorem1}
Let \(\mathcal{\bar{E}}_g(\cdot)\) be an upper bound of generalization error, and \(\rho\) be a fixed margin.
Suppose that Lemma \ref{lemma6} holds, and \(\mathcal{E}_{e}^{\rho}(f)<\mathcal{E}_{e}^{\rho}(f')\) where \(f\in\mathcal{\hat{F}}_{L}\) is a trained \(L\)-layered FC-ResNet classifier such that \(\hat{W}_{l'}>1\) for the muti-label classification problem and \(f'\in\mathcal{\hat{F}}_{[L]\setminus l'}\) is a classifier that erases the \(l'\)-th Residual Unit from \(f\).
For \(\forall\delta>0\) and \(\forall f\in\mathcal{F}\), when the condition \(\textstyle{\mathcal{E}^{\rho}_{e}(f')-\mathcal{E}^{\rho}_{e}(f)<\frac{8M(2M-1)}{\rho}(\bar{R}_{m}(\mathcal{\hat{F}}_{L})-\bar{R}_{m}(\mathcal{\hat{F}}_{[L]\setminus l'}))}\) holds,
we have the bound of \(\mathcal{\bar{E}}_{g}(f')<\mathcal{\bar{E}}_{g}(f)\) with probability \((1-\delta)^2\).
\end{theo}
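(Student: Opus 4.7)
The plan is to apply Lemma \ref{lemma3} twice, once to $f\in\mathcal{\hat{F}}_{L}$ and once to $f'\in\mathcal{\hat{F}}_{[L]\setminus l'}$, with the same margin coefficient $\rho$ and the same confidence parameter $\delta$, and then compare the two resulting upper bounds term by term. Since the two applications are to distinct classifiers drawn from distinct function classes, each bound fails with probability at most $\delta$, and asking that both hold simultaneously accounts for the probability $(1-\delta)^2$ appearing in the conclusion.

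First I would write down the two bounds explicitly. For $f$ Lemma \ref{lemma3} gives an upper bound involving $\mathcal{E}_{e}^{\rho}(f)$, $\frac{8M(2M-1)}{\rho}R_{m}(\mathcal{\hat{F}}_{L})$, and the two $\sqrt{\cdot}$ terms depending only on $\rho$, $m$, and $\delta$; for $f'$ the analogous bound involves $\mathcal{E}_{e}^{\rho}(f')$ and $R_{m}(\mathcal{\hat{F}}_{[L]\setminus l'})$, with the same two $\sqrt{\cdot}$ terms. I would then replace the Rademacher averages by their upper bounds $\bar{R}_{m}(\cdot)$ to obtain the quantities $\mathcal{\bar{E}}_{g}(f)$ and $\mathcal{\bar{E}}_{g}(f')$ asserted by the theorem.

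Next I would form the difference $\mathcal{\bar{E}}_{g}(f')-\mathcal{\bar{E}}_{g}(f)$. The two $\sqrt{\cdot}$ terms are identical in the two bounds and cancel, leaving
\begin{align*}
\mathcal{\bar{E}}_{g}(f')-\mathcal{\bar{E}}_{g}(f)=\bigl(\mathcal{E}_{e}^{\rho}(f')-\mathcal{E}_{e}^{\rho}(f)\bigr)-\frac{8M(2M-1)}{\rho}\bigl(\bar{R}_{m}(\mathcal{\hat{F}}_{L})-\bar{R}_{m}(\mathcal{\hat{F}}_{[L]\setminus l'})\bigr).
\end{align*}
The hypothesis $\mathcal{E}_{e}^{\rho}(f')-\mathcal{E}_{e}^{\rho}(f)<\frac{8M(2M-1)}{\rho}\bigl(\bar{R}_{m}(\mathcal{\hat{F}}_{L})-\bar{R}_{m}(\mathcal{\hat{F}}_{[L]\setminus l'})\bigr)$ then makes the right-hand side strictly negative, yielding $\mathcal{\bar{E}}_{g}(f')<\mathcal{\bar{E}}_{g}(f)$. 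Note that Lemma \ref{lemma6} is what guarantees that the Rademacher difference on the right is actually positive (since $\hat{W}_{l'}>1$), so the hypothesis is not vacuous.

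The only real subtlety, and the point I would flag carefully, is the probability bookkeeping: each invocation of Lemma \ref{lemma3} is a concentration statement over the sample, and combining them via a union-style product gives $(1-\delta)^2$ only if one treats the two events as being at least approximately independent (otherwise a union bound would give $1-2\delta$). Everything else reduces to algebraic rearrangement of two copies of the same inequality, so I would allocate most of the writing to stating the two bounds cleanly and to justifying the probability $(1-\delta)^2$, rather than to the term-by-term cancellation.
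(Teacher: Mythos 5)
Your proposal matches the paper's own proof essentially step for step: apply Lemma \ref{lemma3} to $f$ and $f'$ with the Rademacher averages replaced by their upper bounds, subtract the two expressions so the $\sqrt{\cdot}$ terms cancel, invoke Lemma \ref{lemma6} for positivity of the Rademacher gap, and use the assumed condition on $\mathcal{E}^{\rho}_{e}(f')-\mathcal{E}^{\rho}_{e}(f)$ to conclude $\mathcal{\bar{E}}_{g}(f')<\mathcal{\bar{E}}_{g}(f)$, with the probability $(1-\delta)^2$ coming from requiring both invocations of Lemma \ref{lemma3} to hold. Your caveat about the probability bookkeeping is well taken --- the paper simply multiplies the two $1-\delta$ probabilities without addressing that the two events are over the same sample (a union bound would give $1-2\delta$ instead) --- but this is a feature of the paper's argument as well, so your route is the same as theirs.
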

\begin{proof}
From Lemma \ref{lemma3} and the upper bound of Rademacher average, we first have the following equation:
\(\textstyle{\mathcal{\bar{E}}_{g}(\cdot)}=
\mathcal{E}_{e}^{\rho}(\cdot)+8M(2M - 1)\rho^{-1} \bar{R}_{m}(\mathcal{\cdot}) + \sqrt{{m^{-1}\rm log log}_{2}(2\rho^{-1})}+\sqrt{(2m)^{-1}{\rm log}(2\delta^{-1})}\) with fixed \(\rho\).
Next, from the above equation, we have the following equation:
\begin{align}
\label{thmprfeqn}
&\mathcal{\bar{E}}_{g}(f)-\mathcal{\bar{E}}_{g}(f')= \nonumber \\
&\textstyle{\mathcal{E}^{\rho}_{e}(f)-\mathcal{E}^{\rho}_{e}(f')+
\frac{8M(2M-1)}{\rho}(\bar{R}_{m}(\mathcal{\hat{F}}_{L})-\bar{R}_{m}(\mathcal{\hat{F}}_{[L]\setminus l'}))}.
\end{align}
Since we have \(\bar{R}_{m}(\mathcal{\hat{F}}_{[L]\setminus l'})<\bar{R}_{m}(\mathcal{\hat{F}}_{L})\) in Lemma \ref{lemma6}, we have \(8M(2M-1)\rho^{-1}(\bar{R}_{m}(\mathcal{\hat{F}}_{L})-\bar{R}_{m}(\mathcal{\hat{F}}_{[L]\setminus l'}))>0\).
In addition, we have \(\mathcal{E}_{e}^{\rho}(f')-\mathcal{E}_{e}^{\rho}(f)=\epsilon\) where \(\epsilon\) is a positive value.
Then, when the condition of \(\textstyle{\mathcal{E}^{\rho}_{e}(f')-\mathcal{E}^{\rho}_{e}(f)<}\)
\(\textstyle{8M(2M-1)\rho^{-1}
(\bar{R}_{m}(\mathcal{\hat{F}}_{L})-\bar{R}_{m}(\mathcal{\hat{F}}_{[L]\setminus l'}))}\) holds,
we have \(\mathcal{\bar{E}}_{g}(f)-\mathcal{\bar{E}}_{g}(f')>0\) from (\ref{thmprfeqn}).
Therefore, since we have the bounds of \(\mathcal{\bar{E}}_{g}(f)\) and \(\mathcal{\bar{E}}_{g}(f')\) with probability \(1-\delta\) from Lemma \ref{lemma3}, we have the bound of \(\mathcal{\bar{E}}_{g}(f')<\mathcal{\bar{E}}_{g}(f)\) in Theorem \ref{theorem1} with probability \((1-\delta)^2\).
\end{proof}
Theorem \ref{theorem1} suggests that we can have tight upper bounds of generalization error by erasing layers after the training.
In terms of the retraining, we can retrain the models with good initial parameters that give tight upper bounds of generalization error.
Notice that although Lemma \ref{lemma2} suggests that the representational power may be reduced by erasing layers, Theorem \ref{theorem1} suggests that the model can obtain tight upper bounds of generalization error by erasing layers from trained model.
In other words, there exists the trade off between the representational power and upper bounds of generalization error when we erase layers after the training.
Therefore, if we erase layers form ResNet after training, we may obtain the model that has better generalization performance compared with the model before the erasure.

As just described, since the value of \(\textstyle{\mathcal{E}^{\rho}_{e}(f')-\mathcal{E}^{\rho}_{e}(f)}\) is small by erasing a layer,
the condition of \(\textstyle{\mathcal{E}^{\rho}_{e}(f')-\mathcal{E}^{\rho}_{e}(f)<\frac{8M(2M-1)}{\rho}(\bar{R}_{m}(\mathcal{\hat{F}}_{L})-\bar{R}_{m}(\mathcal{\hat{F}}_{[L]\setminus l'}))}\) is not a strict condition when we erase a layer.
The problem is the case wherein we erase multiple layers.
In this case, the condition may not hold because the value of \(\textstyle{\mathcal{E}^{\rho}_{e}(f')-\mathcal{E}^{\rho}_{e}(f)}\) can be large in Theorem \ref{theorem1}.
This may incur large drop in accuracy.
Our algorithm to eliminate the problem is introduced in the next section.

\subsection{Algorithm}
Theorem \ref{theorem1} suggests that we can effectively reduce the number of layers by retraining ResNet after removing Residual Units from trained ResNet in theory.
In order to realize this ``erasure and retraining'' scheme, we need to solve two additional problems:

{\bf (i) Identifying unimportant residual units.}
ResNet has several important layers whose erasure will dramatically degrade the accuracy.
In fact, since ResNet computes new representations when a stage is changed, several layers after the change are important in terms of achieving accuracy \cite{unroll}.
Since the erasure of important layers makes the value of \(\textstyle{\mathcal{E}^{\rho}_{e}(f')-\mathcal{E}^{\rho}_{e}(f)}\) large, it is difficult to hold Theorem \ref{theorem1}.
Therefore, we need to determine the importance of each Residual Unit in order to select those whose erasure will not drastically degrade accuracy.

To solve the problem, we introduce priority, measure of Residual Unit importance.
This priority can be learned from training data in the same way as other parameters in ResNet.
We can erase top-\(k\) Residual Units in increasing order of priority.
In particular, we determine the priority of \(F(\cdot)\) in (\ref{residual_unit}).
Notice that we have \({\bf x}_{l+1} = {\bf x}_{l}\) from (\ref{residual_unit}) by erasing \(F(\cdot)\) as shown in \cite{ensambles}.
This is equivalent to erasing the Residual Unit for (\ref{residual_unit}) because input \({\bf x}_{l}\) passes as output to \({\bf x}_{l+1}\) without change.
In order to determine the importance of \(F(\cdot)\), we use the following weighted Residual Unit:
\begin{align}
\label{weighted_residual_unit}
{\bf x}_{l+1} = {\bf x}_{l} + w_{l}F({\bf x}_{l}),
\end{align}
where \(w_{l}\) is a scalar that can be learned by back propagation in the same way as other ResNet parameters.
If \(w_{l}\) is small in terms of absolute value, it scales down the output of \(F(\cdot)\). 
In other words, \(F(\cdot)\) has little impact on the result if \(w_{l}\) is small in terms of its absolute value.
Therefore, we can select top-\(k\) unimportant nonlinear mappings \(F(\cdot)\) according to the values of \(|w_{l}|\).
However, we should not erase the first Residual Unit in a stage: it is the first Residual Unit after the dimensionality of \({\bf x}_{l}\) changed.
Although most Residual Units have outputs of the same dimensionality as inputs in (\ref{residual_unit}), the first Residual Unit in a stage changes the dimensionality of the inputs.
\cite{unroll} suggests that these Residual Units are important with regard to accuracy because they produce new representations in ResNet by changing the dimensionality.
Therefore, we use original Residual Units of (\ref{residual_unit}) as the first Residual Units in each stage and do not erase them.

\begin{algorithm}[t]
   \caption{Network Implosion.}
   \label{algorithm:ni}
\begin{algorithmic}[1]
   \REQUIRE training set \(D\), initial learning rate \(\eta\), number of Residual Units \(L\), number of Residual Units to be erased at a time \(k\), total number of Residual Units to be erased \(L'\), number of epochs for retraining \(n\)
   \STATE Initialize parameters of layers in ResNet and weights \(\{w_{l}:l \in[L]\}\).
   \STATE Train the model by using SGD with \(\eta\).
   \STATE Set \(l'=L\) and \(s=0\).
   \WHILE{\(l'>L'\)}
   \STATE \(s \gets s+1\)
   \STATE Set \(I_{s}=\emptyset\).
   \STATE Select top-\(k\) small elements in absolute values of \(\{w_{l}:l\in[L]\}\) and add the indices into \(I_{s}\).
   %%\STATE {\bf Option}: check the condition of \(\hat{W}_{i}>1\) for \(i\in I_{s}\).^M
   \STATE Erase \(w_{i}F({\bf x}_{i})\) in (\ref{weighted_residual_unit}) where \(i\in I_{s}\).
   %\STATE {\bf Option}: check the condition of \(\mathcal{E}_{e}^{\rho}(f')>\mathcal{E}_{e}^{\rho}(f)\).^M
   %%\STATE {\bf Option}: check the condition of \(\textstyle{\mathcal{E}^{\rho}_{e}(f')-\mathcal{E}^{\rho}_{e}(f)}\)^M
%%\(\textstyle{<\frac{8M(2M-1)}{\rho}(\bar{R}_{m}(\mathcal{\hat{F}}_{L})-\bar{R}_{m}(\mathcal{\hat{F}}_{[L]\setminus I_{s}}))}\).^M
   \STATE Retrain the model by using SGD with \(\eta\) for \(n\) epochs.
   \STATE \(l' \gets l'-k\)
   \ENDWHILE
\end{algorithmic}
\end{algorithm}
{\bf (ii) Recovering accuracy for erasing multiple layers.}
\cite{ensambles} reports that accuracy decreases significantly when multiple layers are erased.
Since this also incurs the problem that the value of \(\textstyle{\mathcal{E}^{\rho}_{e}(f')-\mathcal{E}^{\rho}_{e}(f)}\) in Theorem \ref{theorem1} drastically increases, it is difficult to retain accuracy even if we retrain the model.

For this problem, we {\bf gradually erase Residual Units and retrain the network after each erasure}.
Specifically, our algorithm has four steps as follows;
(i) train the network as usual;
(ii) erase a few Residual Units according to \(|w_{l}|\), e.g. \(k=1\);
(iii) retrain network until the number of retraining epochs reaches \(n\), the number of epochs for retraining;
(iv) repeat (ii) and (iii) until we erase a specified number of layers.
By erasing a few layers at one time, we tend to keep the value of \(\textstyle{\mathcal{E}^{\rho}_{e}(f')-\mathcal{E}^{\rho}_{e}(f)}\) small whereas the value is large when we erase many layers at one time.
As a result, we can perform retraining with preferable initial parameters in terms of the bound of generalization error as shown in Theorem \ref{theorem1}.
Note that \cite{deepcompression} retrains the network one time after erasing parameters in order to maintain the accuracy.
However, this approach fails if we erase multiple Residual Units.
This is because we erase, at one time, more parameters than the previous method.

In addition, we {\bf retrain the network after the erasure with large learning rate}.
When we erase multiple layers, the structure of the network drastically changes.
Thus we need to effectively change the parameters in the remaining layers to efficiently recover the accuracy.
To realize this, we set a large learning rate when we retrain the network by using training algorithms such as Stochastic Gradient Descent (SGD) \cite{robbins}\cite{ida}.
In particular, we reuse the original learning rate of initial network training.

Algorithm \ref{algorithm:ni} shows the procedure of Network Implosion.
It repeatedly trains and erases Residual Units as explained above.
First, we train ResNet with the learning rate \(\eta\) (line 2).
Next, we erase top-\(k\) nonlinear mappings \(F(\cdot)\) according to the importance of \(|w_{l}|\) (lines 5-8).
Then, we retrain ResNet with the initial learning rate \(\eta\) (line 9).
We repeat the procedure until the accuracy drops or we erase sufficient numbers of layers (lines 4-11).
%If the conditions in the options do not hold, we stop the algorithm.
%Though we must execute the options in the procedure to follow the theory, they may be skipped to reduce the training cost in actual use.
\section{Experiments}
\begin{figure*}[t!]
\begin{center}
\begin{tabular}{c}
  \begin{minipage}{0.32\hsize}
  \begin{center}
  \includegraphics[viewport = 0.000000 0.000000 504.000000 504.000000, scale=0.22]{./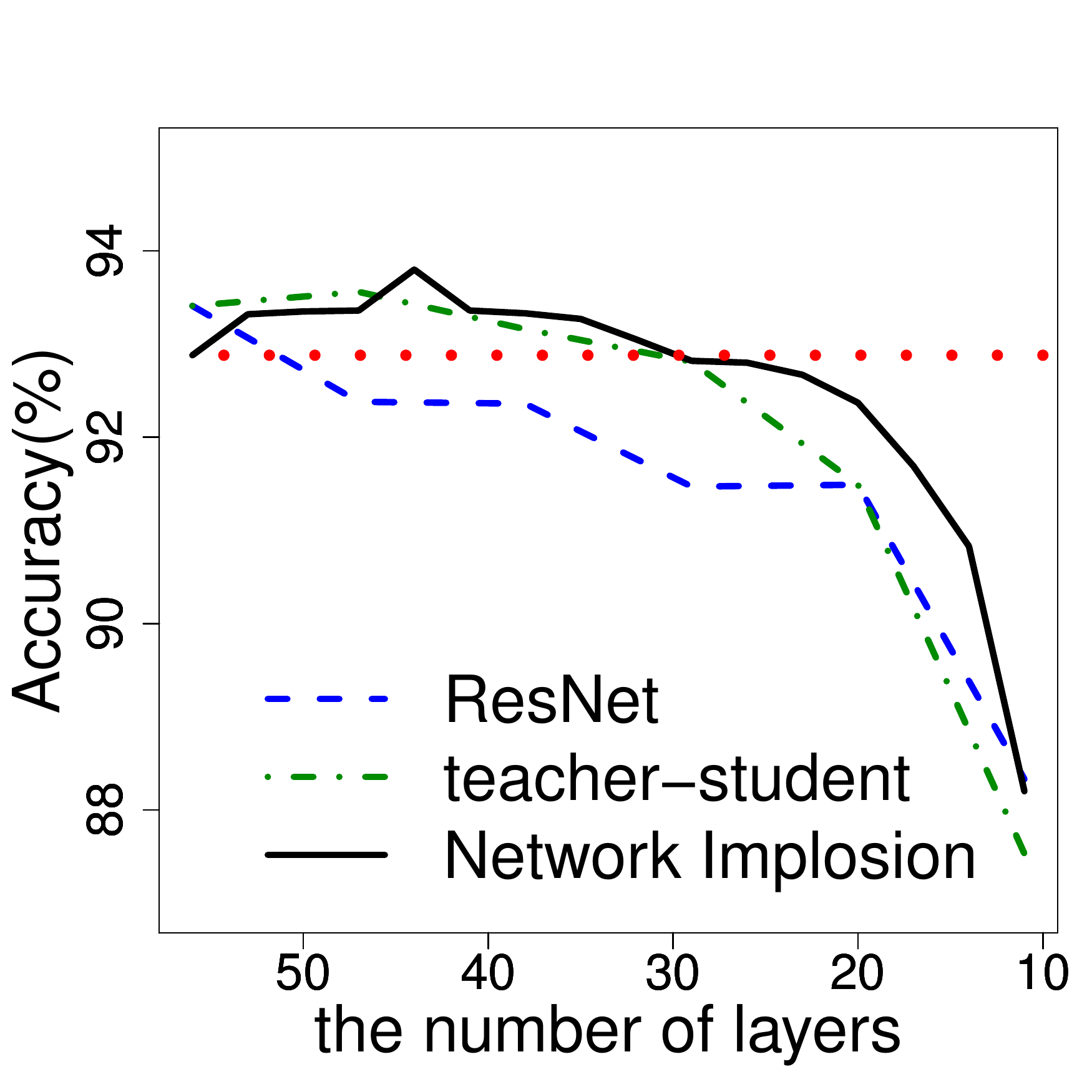} 
  \\{\small (a) Cifar-10}
  \end{center}
  \end{minipage}
  \begin{minipage}{0.32\hsize}
  \begin{center}
  \includegraphics[viewport = 0.000000 0.000000 504.000000 504.000000, scale=0.22]{./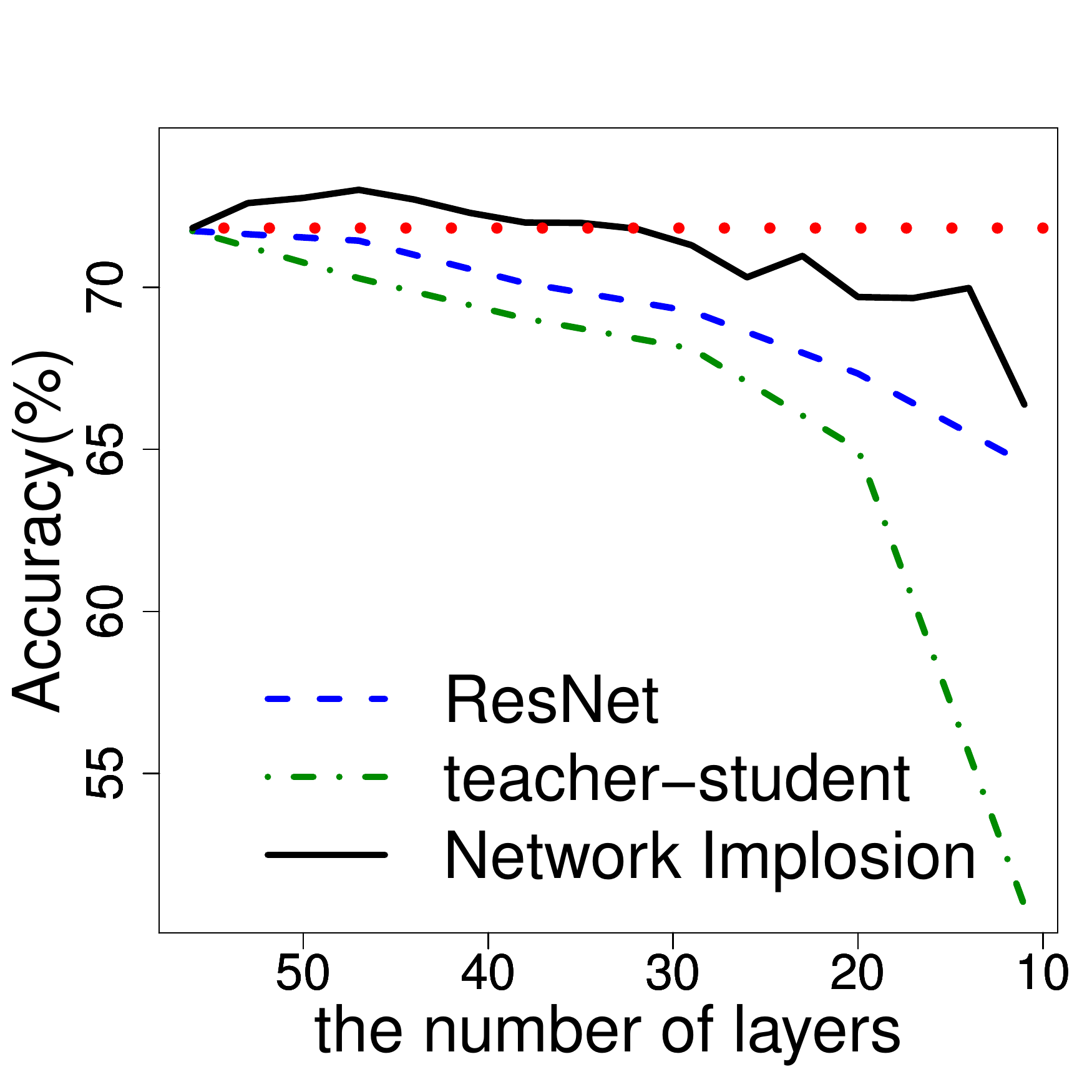} 
  \\{\small (b) Cifar-100}
  \end{center}
  \end{minipage}
  \begin{minipage}{0.32\hsize}
  \begin{center}
  \includegraphics[viewport = 0.000000 0.000000 504.000000 504.000000, scale=0.22]{./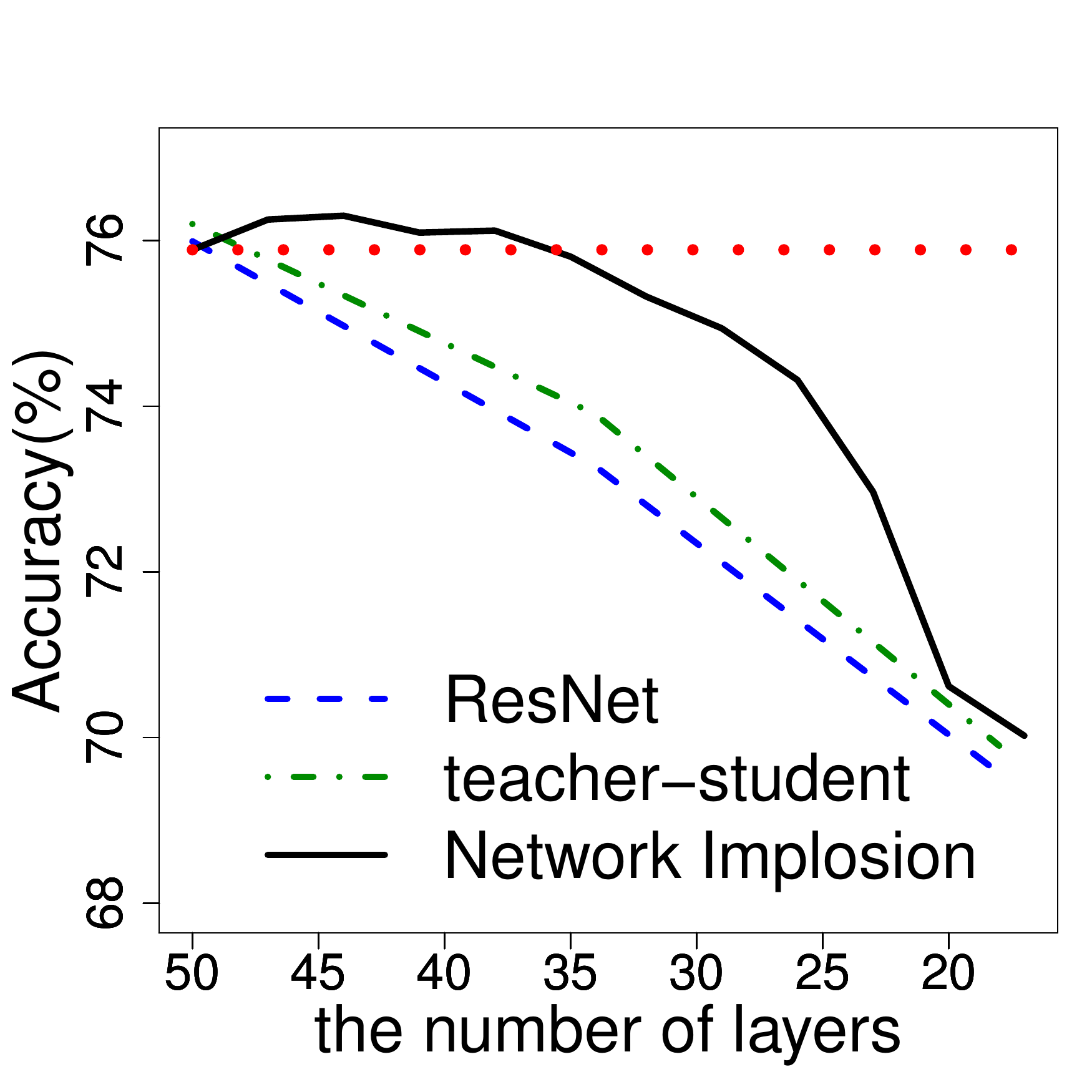} 
  \\{\small (c) ImageNet}
  \end{center}
  \end{minipage}
\end{tabular}
  \caption{Accuracies achieved for Cifar-10, Cifar-100, and ImageNet. The red dotted lines represent accuracies for initial models in our approach: Network Implosion. It achieves higher accuracies than baselines even though it erases many layers.}
  \label{fig_accuracy}
\end{center}
%\vskip -2em
\end{figure*}
We performed experiments to evaluate Network Implosion.
We first investigated the accuracy of our method, and then computational costs for the inference phase.
We implemented our approach in Torch7 \cite{Torch}.

We performed image classification tasks using Cifar-10, Cifar-100 \cite{cifar} and ImageNet dataset \cite{imagenet} from ILSVRC2012.
Cifar-10, Cifar-100 and ImageNet have 10, 100 and 1000 classes, respectively.
The image size is \(32\times32\times3\) for Cifar-10/100.
For ImageNet, we scaled the images \(256\times256\times3\) and took \(224\times224\times3\) single center-crop for training and testing by following \cite{googlenet}.
We applied color, scale and aspect ratio augmentation to the images the same as \cite{alexnet,googlenet}.

We implemented ResNet following \cite{1001layer} as it is used for many image classification tasks.
Specifically, each Residual Unit has three convolutional layers with batch normalization and ReLU  forming a bottleneck structure.
The numbers of stages are three for Cifar-10/100, and four for ImageNet.
As a result, the number of layers is 56 for Cifar-10/100, and 50 for ImageNet, the same as \cite{resnet}.
We used projection shortcuts \cite{1001layer} for increasing dimensions; this is used when stages are changed.
The other hyper parameters were also set according to \cite{1001layer} and fb.resnet.torch\footnote{https://github.com/facebook/fb.resnet.torch}
which is widely used in the deep learning community;
the number of epochs was 200;
the training algorithm was SGD with momentum;
the momentum was 0.9;
the initial learning rate was 0.1;
the learning rate was divided by 10 at 81 and 122 epochs for the Cifar-10 and Cifar-100 datasets.
For ImageNet, we decayed the leaning rate by multiplying the learning rate by \(0.1\) at every 30 epochs.
The mini-batch size was 128 for Cifar-10/100, and 512 for ImageNet.
The weight decay was 0.0001.
The parameters in each layer were initialized as in \cite{he}, a standard method for deep neural networks with ReLU activations.

For comparison, we also used the teacher-student training regime based on Knowledge Distillation \cite{KD}\cite{DDN}.
The teacher-student training regime can reduce the number of layers without additional computation costs for the inference while previous dynamic layer pruning cannot as described in the section of related work.
In addition, teacher-student training regime can directly detemines the number of layers while previous static layer pruning cannot.
We used 56 and 50 layered ResNets as the teacher networks for Cifar-10/100 and ImageNet, respectively.
The temperature was 4, and we used \(\alpha=0.9\) in Cifar-10/100 (see \cite{KD} for a description of these hyper parameters).
In ImageNet, the temperature was 1 and we used 0.5 for the tunable parameter.

In our approach, we used Residual Units of (\ref{weighted_residual_unit}) when \({\bf x}_{l+1}\) had the same dimensionality as \({\bf x}_{l}\).
When the dimensionality of \({\bf x}_{l+1}\) differed from \({\bf x}_{l}\), we used the original Residual Units of (\ref{residual_unit}) and did not erase them.
This is because such Residual Units are important for generating new representations as described in the previous sections.
We erased one Residual Unit (k=1) after each training or retraining cycle.
Since each Residual Unit has three convolution layers, we can erase three layers by erasing a Residual Unit.
In the retraining phase, we used the initial learning rate of 0.1.
The number of epochs was 60.
We divided the learning rate by 10 at 20 and 40 epochs.

\subsection{Accuracy}
We evaluated the validation accuracies of our approach, teacher-student training regime, and original ResNet.
For Cifar-10/100, we reduced the number of layers from 56 to 11.
For ImageNet, we trained 50, 34 and 18 layer models for original ResNet and teacher-student training regime while our method reduced the layers from 50 to 17. 

Fig. \ref{fig_accuracy} shows the experimental results.
The red dotted lines in the figure are initial accuracies of Network Implosion; these are accuracies of 56 and 50 layer models for Cifar10/100 and ImageNet, respectively.
Although Network Implosion erases layers from the model, it yielded accuracies above the red dotted lines by erasing layers.
This is because we can retrain the models with preferable initial parameters as described in Theorem \ref{theorem1}.
This result verifies our theoretical result.
Finally, we could reduce the number of layers to 32, 35 and 38 for Cifar-10, Cifar-100 and ImageNet without accuracy loss, respectively.
These numbers correspond to crossover points of the red and black lines in Fig. \ref{fig_accuracy}.
In other words, we could reduce the number of layers by 42.86\%, 37.50\% and 24.00\% for Cifar-10, Cifar-100 and ImageNet, respectively.
Notice that when we simply reduced the number of layers in original ResNet, the accuracies fell even though the number of training epochs was the same as our method (blue dashed lines in the figures).
Although the teacher-student training regime achieves comparable accuracies to our method for Cifar-10, it rapidly degrades the accuracies as more layers were eliminated for Cifar-100 and ImageNet (green dot-dash lines in the figures).
In particular, it drastically degraded the accuracy at 34 layers whereas our algorithm kept the accuracy high for ImageNet.
These results reveal that Network Implosion is effective in reducing the number of layers even if we use real-world datasets such as ImageNet.
\begin{table*}[t!]
  \caption{The computation costs for the inference phase after erasing layers without accuracy loss. 56 and 50 layered models are original models for Cifar-10/100 and ImageNet, respectively. Our method reduces all computation cost without accuracy loss.}
  \label{time}
  \begin{center}
  \small
  \begin{tabular}{|c|c|c|c|c|c|c|}
  \hline
%dataset & \# of layers & accuracy (\%) & \# of multiply-accumurate operations (MACs) & forward (msec) & backward (msec) & \# of parameters \\
dataset & \shortstack{\# of\\layers} & \shortstack{accuracy\\(\%)} & \shortstack{\# of\\MACs} & \shortstack{forward\\(msec)} & \shortstack{backward\\(msec)} & \shortstack{\# of\\parameters} \\
  \hline
Cifar-10 & 56 & 92.88 & \(8.19\times10^7\) & \(6.584\) & \(12.93\) & 585.9K \\ \cline{2-7}
         & {\bf 32} & {\bf 93.05} &  \({\bf 4.99\times10^7}\) & {\bf 3.970} & {\bf 7.721} & {\bf 409.1K} \\
  \hline
Cifar-100 & 56 & 71.83 & \(8.65\times10^7\) & \(6.203\) & \(13.36\) & 613.6K \\ \cline{2-7}
          & {\bf 35} & {\bf 71.99} & \({\bf 5.44\times10^7}\) & {\bf 4.350} & {\bf 8.075} & {\bf 555.3K} \\
  \hline
ImageNet & 50 & 75.89 & \(4.11\times10^9\) & \(29.95\) & \(59.51\) & 25.55M \\ \cline{2-7}
         & {\bf 38} & {\bf 76.12} & \({\bf 3.23\times10^9}\) & {\bf 22.97} & {\bf 46.53} & {\bf 23.80M} \\
  \hline
  \end{tabular}
  \end{center}
\end{table*}
\subsection{Computation Costs}
We evaluated the computation costs for the inference phase: the number of MAC (multiply-accumulate) operations, the processing times of forward and backward propagations, and the number of parameters.
MAC is the main operation of deep neural networks and is used in convolution and fully-connected layers.
As the processing times, we averaged 100 runs for forward and backward propagation.
In addition, we counted the number of parameters to be learned for evaluating model sizes.
We used the same setting as the previous section for training.
By using Network Implosion, we reduced 56-layered models to 32 and 35 layers for Cifar-10 and Cifar-100, respectively.
For ImageNet, we reduced the 50-layered model to 38 layers.
These models are the smallest models with no drop in accuracy as described in Fig. \ref{fig_accuracy}.

Table \ref{time} shows the results.
The table shows that the numbers of MACs are reduced to 60.93\%, 62.89\% and 78.59\% of baselines for Cifar-10, Cifar-100, and ImageNet, respectively.
In proportion to the number of MACs, we could reduce the processing times of forward propagation to 60.23\%, 70.13\% and 76.69\% of baselines for each dataset.
For backward propagation, which is used for fine-tuning in Transfer Learning \cite{trans}, we could achieve 59.71\%, 60.44\% and 78.19\% of the processing times for the respective datasets.
In terms of model size, our approach reduced the number of parameters to 69.82\%, 90.50\% and 93.15\% of baselines for Cifar-10, Cifar-100 and ImageNet with no drop in accuracy, respectively.
The results reveal that our approach erases layers without additional computation costs or drop in accuracy for the inference phase.

\section{Conclusion}
We proposed Network Implosion that can erase multiple layers from ResNets with no loss of accuracy.
It offers high accuracy by using priority to select which Residual Units to erase; the remaining units are retrained.
Our theoretical results guarantee that the ``erasure and retraining'' scheme can easily recover the accuracy:
we can retrain the model with preferable initial parameters in terms of generalization error bound.
We evaluated our approach on Cifar-10, Cifar-100 and ImageNet.
The results show that Network Implosion effectively reduces the number of layers without degrading the accuracy even on real-world datasets such as ImageNet.

\bibliography{neural_implosion.bib}
\bibliographystyle{unsrt}

\end{document}